\newcommand{\expectancy}[3]{ \operatornamewithlimits{\mathbb{E}_{\:#1}^{\:#3}} \left [ #2 \right ] } 
 \newcommand{\D}{\mathcal{D}}
 \newcommand{\G}{\mathcal{G}}
 \newcommand{\x}{\mathbf{x}}
 \newcommand{\R}{\mathbb{R}}
\let\Ginclude@graphics\@org@Ginclude@graphics 
\title[Short Title]{Early and Revocable Time Series Classification}
  \author{\Name{Youssef Achenchabe} \Email{youssef.achenchabe@universite-paris-saclay.fr}\\
  \addr Paris-Saclay University \\ Orange Labs
  \AND
  \Name{Alexis Bondu} \Email{alexis.bondu@orange.com}\\
  \addr Orange Labs
  \AND
  \Name{Antoine Cornuéjols} \Email{antoine.cornuejols@agroparistech.fr}\\
  \addr Paris-Saclay University
  \AND
  \Name{Vincent Lemaire} \Email{vincent.lemaire@orange.com}\\
  \addr Orange Labs
 }
\begin{document}

\maketitle

\begin{abstract}
Many approaches have been proposed for early classification of time series in light of its significance in a wide range of applications including healthcare, transportation and finance. 
Until now, the early classification problem has been dealt with by considering only irrevocable decisions.
This paper introduces a new problem called \textit{early and revocable} time series classification, 
where the decision maker can revoke its earlier decisions
based on the new available measurements. 
In order to formalize and tackle this problem, we propose a new cost-based framework and derive two new approaches from it. 
The first approach does not consider explicitly the cost of changing decision, while the second one does.
Extensive experiments are conducted to evaluate these approaches on a large benchmark of real datasets. 
The empirical results obtained convincingly show (\textit{i}) that the ability of revoking decisions significantly improves performance over the irrevocable regime, and (\textit{ii}) that taking into account the cost of changing decision brings even better results in general. 
\end{abstract}
\begin{keywords}
revocable decisions, cost estimation, online decision making
\end{keywords}

\section{Introduction}
\label{sec_intro}

Consider Eisenhower, in June 1944, having to decide when to launch the landing on the French coast (\cite{dday1944}). He had an imperfect knowledge of the weather conditions. The longer he waited, the more precise they became, allowing for a more informed decision: to launch the landing today or wait for another day, but the more difficult it became to ensure that all arrangements would be met and that the enemy remained unaware of the danger. Eisenhower was faced with a very common problem, even if dramatic here, to have to optimize a trade-off between the earliness of a decision and its potential cost. Note that once the decision to launch operation Overlord was made, it was irrevocable. There was no way it could be halted. 

In many situations, however, one can take a decision and then decide to change it after some new pieces of information become available. 
The change may be costly but still warranted because it seems likely to lead to a much better  outcome. 
This can be the case for instance when an outdoor event is canceled due to a dramatic change in the weather forecast, or when a doctor revises what now seems a misdiagnosis.  


Here again, we are faced with an an early classification problem: having to decide to make a prediction about a situation before 
all information becomes available because it is costly to wait, but now having the opportunity to change the prediction made if needed, potentially several times. 

We can liken our work to what is studied in control theory (\cite{bennett1993history}). Control theory is concerned as well with online decision making. For instance, when firing a rocket, the engines are controlled in strength and direction such as to maintain the rocket on the correct course. While the control here is instantaneous, other examples involve some prediction at what is likely to happen. This is the case with anti-aircraft guns that must point ahead of the plane in order to hit it. And the farther away is the plane from the gun, the more ahead of the plane should the pointing be. Notice that a whole lot of standard rules in control theory use as well integration of measurements over some time interval, thus taking stock of the past to take decisions.

Control theory is the province of smart engineers who know how to design mathematical formula that capture all of the knowledge pertinent for the problems to be solved. In numerous fields however, it is difficult or impossible to formulate such mathematical rules, either because the field is ripped with intricate and numerous factors, like in biology, economy or sociology, or because the environment changes in such a way that it is not worth trying to find mathematical formulas that would soon be useless. In these cases, one way to circumvent this difficulty is to rely on heuristic rules learned from data representative of the environment using machine learning methods. 

Early classification of time series goes one step further and tries to learn second order knowledge. The idea is to learn \textit{when} there will be enough information to decide in such a way as to optimize both the expectation of the misclassification cost and the cost associated with delaying the decision. 

Back to control theory, tremendous progress has been achieved when feedback signals have been taken into account. Then, the system, by observing the discrepancy between the prediction and further measurements, can correct its earlier decisions. Again, this is what is done in anti-aircraft systems. And, here also, these feedback loops which control by how much to correct, after which time interval, and so on, have to be designed by engineers using their knowledge of the domain.

In the early classification of time series context, this translates into revocable early classification of time series. There, the system is allowed to estimate the best time to make a decision, but also to estimate when to revise prior decisions, if the need seems warranted. There also, the rules can be learned from the available data. For instance, it can be anticipated that large amounts of such data are and will become available in the domain of autonomous vehicles. 

To illustrate in a more concrete way the usefulness of extending Early Classification to revocable decisions, let us consider the example of an emergency stop system implemented in an autonomous car. 
Let us assume that this system is equipped with several sensors, consisting of radars and cameras, which scan the road in order to detect a possible obstacle. The reliability of these measurements decreases when the distance between the car and the observed point increases.
The car is driven at 120 km/h on the highway and a dark shape located 300 meters ahead is detected on the camera image. At this moment, there is a doubt on the nature of this shape which could be an obstacle on the road with a probability of 0.008. Thus, the system decides to brake. While approaching, the image of the camera becomes more precise and the radars which have a range limited to 100 meters can now be used: it does not detect an obstacle. The system decides to release the brake because it recognizes a dark spot on the road which is at the origin of the shape initially observed.


\smallskip
\noindent
\textbf{Some notations}

More formally, we assume that there exists a data set ${\cal S} = \{({\mathbf x}_T^i, y_i)\}_{1 \leq i \leq m}$ of \textit{complete} time series ${\mathbf x}_T \, = \, \langle {x_1}, \ldots, {x_T} \rangle$ each of which is associated with a label $y \in {\cal Y}$ (e.g. patient who needs a surgical operation or patient who does not). The measurements $x_{i}$ ${(1 \leq i \leq T)}$ belong to some input space ${\cal X}$ and can be univariate as well as multivariate. 
At each time step $t$, the decision-maker gets to know the time series measured so far: ${\mathbf x}_t \, = \, \langle {x_1}, \ldots, {x_t} \rangle$ and must decide either to make a prediction $\hat{y}_t$ about the class of the incoming time series or to postpone the decision. 

In the \textit{irrevocable regime}, once a decision has been taken, it cannot be changed and the decision-maker endures a cost which is the sum of the misclassification cost $\mathrm{C}_m(\hat{y}_t|y)$ plus the cost of having delayed the decision until time $t$: $\mathrm{C}_d(t)$. Whereas, in the \textit{revocable regime}, the decision-maker can change its prediction several times before the time limit $T$. Let us call $\D_{k}$, the sequence of the $k$ predictions $\langle \hat{y}_{t_1}, \ldots, \hat{y}_{t_k} \rangle$  made at times $t_1, \ldots, t_k$ in the time interval $\{1, \ldots, T\}$. Each decision change from $\hat{y}_{t_i}$ to $\hat{y}_{t_{i+1}}$ entails a cost $C_{cd}(\hat{y}_{t_{i+1}}|\hat{y}_{t_{i}})$. Then, the cost endured by the decision maker at time $T$ is given by Equation \ref{eq_total_cost} which sums all previously introduced costs.




Interestingly, while the early classification of time series, in the irrevocable regime, has been addressed in several papers in the last few years, we do not know of similar works for the revocable regime. 
One reason could be that this does not seem worthwhile. Are there so many situations, after all, where changing decisions could reduce the overall cost of the expected misclassification cost and of the cost associated with further delays? A second reason is that the problem seems difficult. Our work shows in effect that it is not straightforward to identify the best instants to revoke a decision. But yet, it shows also that such revocations can yield significant gains. This is demonstrated by the statistics presented at the beginning of the experimental section: indeed, there are very few useful revocations for sure, but the system presented is able to identify them. And, as shown by comparison between our approach and the best known irrevocable method, to our knowledge, the gain in performance is statistically significant. 

\textcolor{magenta}{
}


\smallskip
The contribution of this paper is threefold. 
First, it formalizes the optimization problem associated with the revocable regime for the early classification problem.
Second, it proposes two approaches to tackle this problem. Both approaches are \textit{non-myopic} in that they take into account expectancies about likely futures to take their decisions. The first approach does not consider explicitly the cost of changing decision, while the second one does. 
Third, extensive experiments are presented that both allow the comparison of the two approaches and show that it is actually better to be able to revise decisions than to implement an irrevocable decision strategy. 

For the clarity of this paper, we consider a simple case where the input is in the form of a univariate time series whose measurements are observed over time (i.e., equivalent to a single sensor). The framework and approaches presented in this paper can be adapted to multivariate time series in a trivial way.

\smallskip
The rest of this paper is organized as follows.
Section \ref{related_work} provides an overview of classical  \textit{early classification} approaches, all of which deal with the irrevocable regime.
Section \ref{sec_ECONOMY_framework} focuses on a non-myopic framework which is designed for the irrevocable regime. The \textit{early and revocable classification} problem is defined in Section \ref{the_problem_to_solve}. Then, two new approaches are proposed, which are evaluated through extensive experiments in Section \ref{sec_Experiments}. 
Perspectives and future work are discussed in Section \ref{sec_conclusion}. 

\section{State of the art on early classification} 
\label{related_work}
All works that we are aware of deal with the early classification of time series problem in the irrevocable regime. Most of these works do not take into account explicitly the costs associated with a decision: the cost of misclassification and the cost of delaying decision. Instead, they generally base their decision on some form of confidence criterion and wait until a predefined threshold is reached before making a decision. 
For instance,   in (\cite{xing2009early}), the best time step to trigger the decision is estimated by determining the earliest time step for which the predicted label does not change, based on a 1NN classifier.  
    Similarly, (\cite{mori2017reliable}) proposes a method where the accuracy of a set of probabilistic classifiers is monitored over time, which allows the identification of time steps from whence it seems safe to make predictions.
In (\cite{parrish2013classifying,hatami2013classifiers,ghalwash2012early}), a classifier $h_t(\cdot)$ is learned for each time step and various stopping rules are defined (e.g. threshold on the confidence level).
    

With the advent of deep learning, researchers tried to revisit the problem of early classification in the light of these modern techniques. (\cite{russwurm2019end}) proposed a trainable framework for early classification of time series that can be fine-tuned end-to-end using standard gradient back-propagation. (\cite{martinez2018deep}) and (\cite{hartvigsen2019adaptive}) approach early classification as a reinforcement learning problem, a perspective extended to multi-label classification in (\cite{hartvigsenrecurrent}).

Few works do explicitly take costs into account.
A notable example is (\cite{mori2019early}) where the conflict between earliness and accuracy is explicitly addressed. Moreover, instead of setting the trade-off in a single objective optimization criterion as in (\cite{mori2017early}), the authors keep it as a multi-objective criterion and to explore the Pareto front of the multiple dominating trade-offs. 

It is however in (\cite{dachraoui2015early}), that the early classification problem is for the first time cast as the optimization of a loss function which combines the expected cost of misclassification at the time of decision plus the cost of having delayed the decision thus far. 
Importantly, besides the fact that this optimization criterion is well-founded, it permits also the expected costs for an incoming subsequence ${\mathbf x}_t$ to be estimated for future time steps.  A non-myopic decision procedure can thus be used. These expectations about the foreseeable future of an incoming time series can be learned from the training set {of $m$ full-length time series ${\cal S} = \{({\mathbf x}_T^i, y_i)\}_{1 \leq i \leq m}$.}

Approaches that do not explicitly consider costs are ill-equipped to deal with the possibility of revocable decisions. At best, they could base such revisions on observing that the confidence level falls below the pre-set threshold, and possibly exceeds it again, but this would not allow for the associated costs: of decision change and of delay. 


In the following, we therefore focus our attention on a setting where the costs are explicit factors entering the optimization problem.

\section{A cost-based non-myopic framework}
\label{sec_ECONOMY_framework}

In this section, we introduce a cost-based non-myopic framework that was designed for the irrevocable regime (\cite{achenchabe2021early}). The following sections will show how it can be adapted to the revocable regime. 

We suppose that a training  set ${\cal S} = \{({\mathbf x}_T^i, y_i)\}_{1 \leq i \leq m}$ of complete time series, with their associated labels, exists. 

\noindent
\textbf{I-} For each time step, $t \in \{1, \ldots, T\}$, a classifier $h_t$ can be learned $h_t: {\cal X}^t \rightarrow {\cal Y}$. 
Note that this contrasts with learning from data stream where the world can be non-stationary and the classifiers might have to evolve over time. 

\noindent
\textbf{II-} Using these classifiers and the knowledge that can be extracted from ${\cal S}$ when estimating the likely future of an incoming time series ${\mathbf x}_t$, it is possible to estimate the optimal instant for deciding a prediction about its class\footnote{This can be seen as an instance of the LUPI (Learning Under Privileged Information) framework (\cite{vapnik2009new}): during the learning phase, the learner has access to the full knowledge about the training time series ${\cal S} = \{({\mathbf x}_T^i, y_i)\}_{1 \leq i \leq m}$, while at testing time, only a subsequence ${\mathbf x}_t$ ($t < T$) is known.}.


More precisely, given the \textit{misclassification cost} function $\mathrm{C}_m(\hat{y}|y) : {\cal Y} \times {\cal Y} \rightarrow \R$ that expresses the cost of predicting $\hat{y}$ when the true class is $y$ and the \textit{delay cost} function $C_d(t) : \R \rightarrow \R$ which is assumed to be an increasing function of time,
the expectancy of the cost of taking a decision at time $t$ given the incoming time series ${\mathbf x}_t$ is:

\begin{equation}
 \begin{split}
        \normalsize
        f(\mathbf{x}_t) \; &= \; \expectancy{(\hat{y},y) \in  {\cal Y}^2}{\mathrm{C}_m(\hat{y}| y)|\mathbf{x}_t}{t} \; +  \mathrm{C}_d(t)  \\
        &= \; \sum_{y \in {\cal Y}} P_{t}(y|\mathbf{x}_t) \, \sum_{\hat{y} \in {\cal Y}} P_t(\hat{y}|y, \mathbf{x}_t) \, \mathrm{C}_m(\hat{y}|y) \; + \; \mathrm{C}_d(t)
        \label{eq:cost1}
 \end{split}    
\end{equation}

where $\expectancy{(\hat{y},y) \in  {\cal Y}^2}{}{t}$ is the expectancy at time $t$, over the variables $y$ and $\hat{y}$. $P_{t}(y|\mathbf{x}_t)$ is the probability of the class $y$ given a time series that starts as ${\mathbf x}_t$ , and $P_t(\hat{y}|y, \mathbf{x}_t)$ is the probability that the classifier $h_t$ makes the prediction $\hat{y}$ given ${\mathbf x}_t$ as input and when $y$ would be its true label. 
In this non-myopic setting, the idea is that the decision of making a prediction is made at the current time $t$ only insofar that it is not expected that a lower cost could be achieved at a later time. 
This could happen if the expected misclassification cost would drop sufficiently to offset the increase of $\mathrm{C}_d(t)$. 

For any time in the future $t + \tau$ ($1 \leq \tau \leq T-t$), the expected cost of making a prediction can be estimated as:
\begin{equation}
 \begin{split}
        \normalsize
        f_{\tau}(\mathbf{x}_t) \; &= \; \expectancy{(\hat{y},y) \in  {\cal Y}^2}{\mathrm{C}_m(\hat{y}|y)}{t+\tau} \; +  \; \mathrm{C}_d(t+\tau) \;   \\
         &= \; \sum_{y \in {\cal Y}} P_{t+\tau}(y|\mathbf{x}_t) \, \sum_{\hat{y} \in {\cal Y}} P_{t+\tau}(\hat{y}|y, \mathbf{x}_t) \, \mathrm{C}_m(\hat{y}|y) \; + \; \mathrm{C}_d(t+\tau)
        \label{eq:cost2}
 \end{split}    
\end{equation}
\noindent
and $f_{0}(\mathbf{x}_t) \; = \; \expectancy{y \in  {\cal Y}}{\mathrm{C}_m(\hat{y}_t|y)}{t} \; +  \; \mathrm{C}_d(t)$ since we have access to predictions at current time.
%
Then the optimal decision time, at time $t$, is expected to be: 
\label{eq:time2}
$\tau^* \; = \; \operatornamewithlimits{ArgMin}_{\tau \in \{0, \ldots, T-t\}} f_\tau(\mathbf{x}_t)$

The idea is to estimate the cost of a decision at all future time steps, up until $t=T$, based on the current knowledge about the incoming time series, and to postpone the decision to the time step that appears to be the best. 

If $\tau^* = 0$ then the best time for prediction seems to be now, the prediction $h_t({\mathbf x}_t)$ is returned and the classification process is terminated. Otherwise the decision is postponed to the next time step, and Eq. \ref{eq:time2} is computed again, this time with ${\mathbf x}_{t+1}$. The process goes on until a decision is made or $t = T$ at which point a prediction is forced. 

While this irrevocable decision process is well-founded and has proven to be quite efficient in extensive experiments (\cite{achenchabe2021early}), it can nonetheless lead to non optimal decisions when the estimated expected future cost of decision reveals itself to be erroneous. Figure  \ref{fig1} provides an example where at time $t$, it is expected that all future instants will lead to worse costs, when actually a better decision time occurs later, that could even have been anticipated given just a few additional measurements. 

\begin{figure}[htbp!]
\centering
\includegraphics[width=0.35\linewidth]{./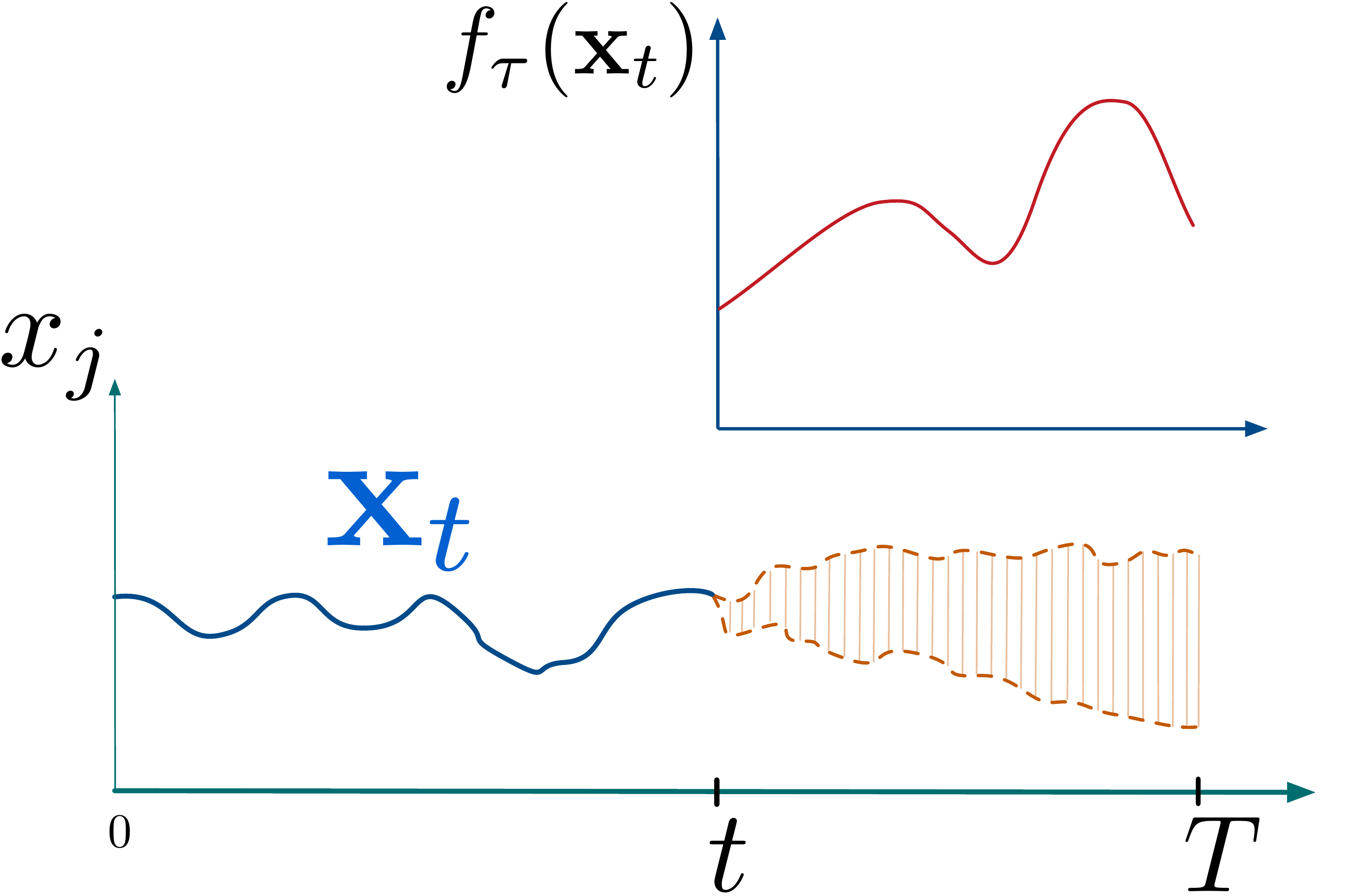}
\includegraphics[width=0.35\linewidth]{./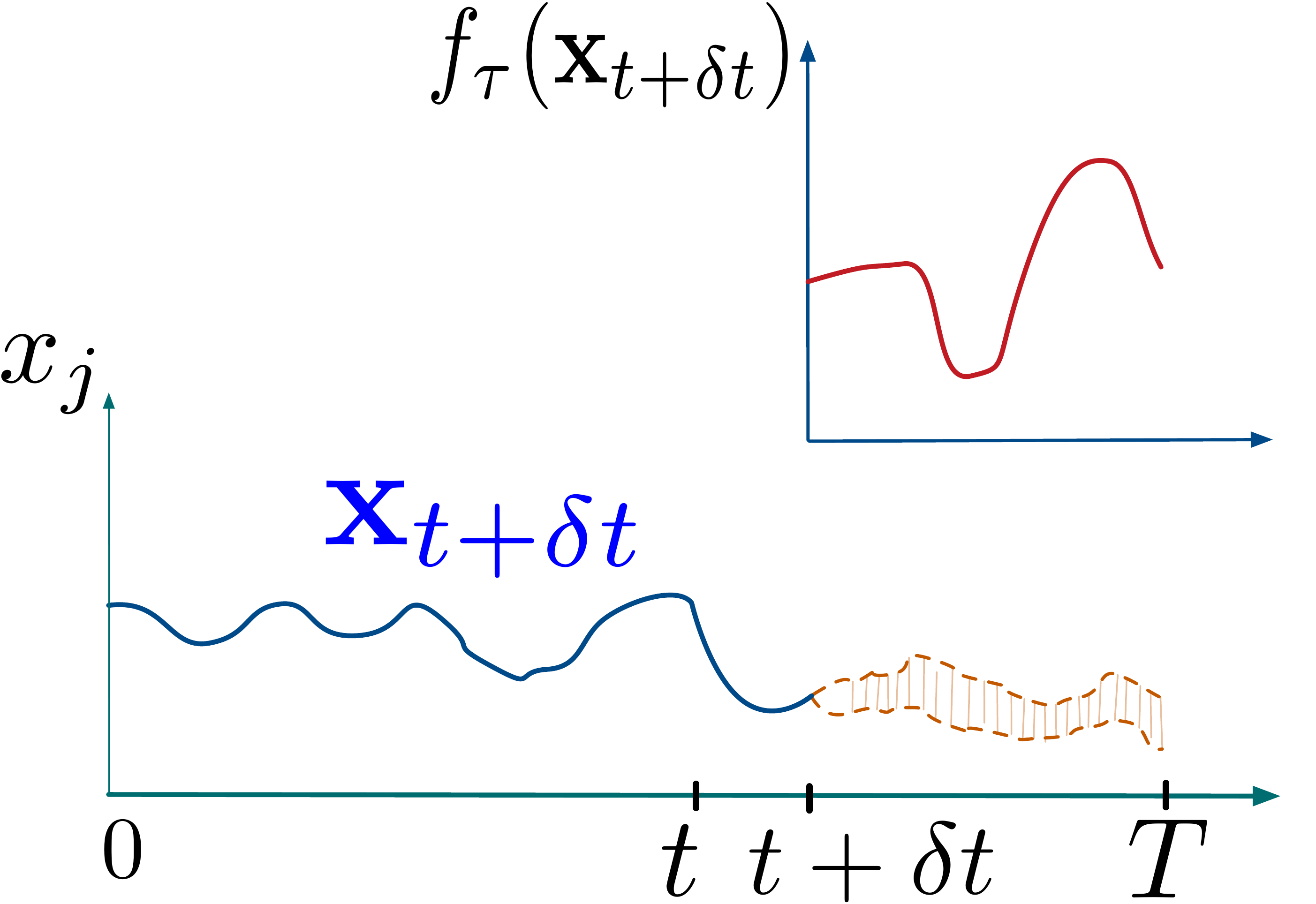}
\caption{(Left) At time $t$, the system foresees that this is the best time to make a decision, $f_\tau({\mathbf x}_t)$ is at its lowest. (Right) At some ulterior time $t+\delta t$, it appears that there should be a better time for decision in the future.}
\label{fig1}

\end{figure}


The question then arises as to how best adapt the irrevocable non-myopic strategy just described to the revocable regime where changes of decisions are allowed until $T$, but at the expense of incurring the additional costs associated with these changes.

 \section{A new framework for revocable decisions}
\label{the_problem_to_solve}

Suppose that while the measurements $x_t$ about time series ${\mathbf x}_T$ unfold from time $t=1$ to $t = T$, the decision-maker can change its mind as many times as it sees fit and  ends up triggering a sequence of predictions ${\cal D}_\ell = \langle \hat{y}_{t_1}, \ldots, \hat{y}_{t_{\ell}} \rangle$ about the class of the input 
time series.
The final cost incurred will be:
\begin{equation}
g({\cal D}_\ell | {\mathbf{x}}_T, y) \; = \; \mathrm{C}_m \left (\hat{y}_{t_\ell}|y \right )  \, + \,  \mathrm{C}_d(t_\ell) + \hspace{-0.3cm} \sum\limits_{\substack{i=1 \\ \hat{y}_{t_{i}}, \hat{y}_{t_{i+1}} \in \cal D}_\ell   }^{\ell-1}\hspace{-0.3cm}C_{cd}(\hat{y}_{t_{i+1}}|\hat{y}_{t_{i}})
\label{eq_total_cost}
\end{equation}
where $t_\ell$ is the time of the last change of decision yielding the prediction $\hat{y}_{t_\ell} = h_{t_\ell}({\mathbf x}_{t_\ell})$, when the true class of ${\mathbf x}_T$ 
is $y$. Moreover, the cost of changing decision is defined as $\mathrm{C}_{cd}(\hat{y}_1|\hat{y}_2) : {\cal Y} \times {\cal Y} \rightarrow \R$ with $\mathrm{C}_{cd}(\hat{y}_1|\hat{y}_2)=0$ if $\hat{y}_1=\hat{y}_2$.


\smallskip
Formally, the problem is to find a sequence of decisions 
${\cal D}^\star \in {\mathbb{D}_T}$ that minimizes Equation \ref{eq_total_cost}:
\begin{equation}
{\cal D}^\star \; = \; \operatornamewithlimits{ArgMin}_{{\cal D} \in {\mathbb{D}_T}} g({\cal D} | {\mathbf{x}}_T, y) 
\end{equation}
where $\mathbb{D}_T$ is the set of all possible sequences of maximum length $T$.

\smallskip
When, at time $t$, only a partial knowledge ${\mathbf x}_t$ is available about the incoming time series, Equation \ref{eq_total_cost} cannot be computed. A sequence of decisions  ${\cal D}_k = \langle \hat{y}_{t_1}, \ldots, \hat{y}_{t_k} \rangle$ has been taken so far, and the question is to see if changing the last decision $\hat{y}_{t_k}$ now, at time $t$, is favorable, because it would bring a better expected cost, and it would not seem better to postpone such a possible change to a later time $t+\tau$. 

The cost of adding a new decision at time $t+\tau$, can be estimated as:

\begin{equation}
 \begin{aligned}
f_\tau^{\text{rev}}({\cal D}_k, \Tilde{t} \,|\,{\mathbf x}_t) \; = \; 
&\expectancy{(\hat{y},y) \in  {\cal Y}^2}{\mathrm{C}_m(\hat{y}|y) |\, {\mathbf x}_t}{t+\tau} \, + 
\sum\limits_{\substack{i=1 \\ \hat{y}_{t_{i}}, \hat{y}_{t_{i+1}} \in \cal D}_k  }^{k-1} \mathrm{C}_{cd}(\hat{y}_{t_{i+1}}|\hat{y}_{t_{i}}) \, 
+ \, 
\, \\ &\expectancy{\hat{y} \in  {\cal Y}}{\mathrm{C}_{{cd}}(\hat{y}|\hat{y}_{t_k})|\, {\mathbf x}_t}{t+\tau} \, + 
\, \vphantom{\expectancy{\hat{y}_t,y}{\mathrm{C}_m(\hat{y}_t|y)}{t+ \tau}} \mathrm{C}_d(\Tilde{t})
\end{aligned}
\label{eq:change_cost_criterion}
\end{equation}

and for $\tau=0$, \\$f_{\tau=0}^{\text{rev}}({\cal D}_k, \Tilde{t} \,|\,{\mathbf x}_t) \; = \; 
\expectancy{y \in  {\cal Y}}{\mathrm{C}_m(\hat{y}_t|y) |\, {\mathbf x}_t}{t} \, +
\sum\limits_{\substack{i=1 \\ \hat{y}_{t_{i}}, \hat{y}_{t_{i+1}} \in \cal D}_k  }^{k-1} \mathrm{C}_{cd}(\hat{y}_{t_{i+1}}|\hat{y}_{t_{i}}) \, 
+ \mathrm{C}_d(\Tilde{t})$\\
where $\Tilde{t}$ is introduced as input of the function $f_\tau^{\text{rev}}$ in order to control the delay cost paid by the user. In addition, the expected cost of changing decision is defined as follows for ($1 \leq \tau \leq T-t$): 

\begin{equation}
\expectancy{\hat{y} \in  {\cal Y}}{\mathrm{C}_{{cd}}(\hat{y}|\hat{y}_{t_k})|\, {\mathbf x}_t}{t+\tau} \; = \; \sum_{\hat{y} \in {\cal Y}}    P_{t+\tau}(\hat{y}|\hat{y}_{t_k},{\mathbf x}_t) \mathrm{C}_{cd}(\hat{y}|\hat{y}_{t_k}) 
\label{eq:change_cost_expectancy}
\end{equation}
and for $\tau=0$, this term equals zero by convention because we have full knowledge of the prediction at the current time step t. Then, given that the notation ${\cal D}_{k+1}$ is used to denote the sequence of decisions $\langle \hat{y}_{t_1}, \ldots, \hat{y}_{t_k}, \hat{y}_{t} \rangle$, the criterion for changing decision at time $t$ becomes:


\begin{equation}
\label{eq:change_cost}
\text{\textit{criterion}} \; = \; 
\begin{cases}
~~~~~~~~~ ~~~ \hat{y}_{t} \neq \hat{y}_{t_k} \\
\text{and}~~~ \operatornamewithlimits{ArgMin}\limits_{\tau \in \{0, \ldots, T-t\}} f_\tau^{\text{rev}}({\cal D}_k, t+\tau \, | \, \mathbf{x}_{t}) \; = \; 0 \\
\text{and}~~~ f_{\tau=0}^{\text{rev}}({\cal D}_{k+1}, t \,|\,{\mathbf x}_t)  < f_{\tau=0}^{\text{rev}}( {\cal D}_k, t_k\,|\,{\mathbf x}_{t})
\end{cases}
\end{equation}

%
%

A decision is thus taken at time $t$ only if ($i$) the current prediction $\hat{y}_{t}$ would differ from the last one $\hat{y}_{t_k}$,  (\textit{ii}) if it seems that now is the best time to make a new decision, and (\textit{iii}) if the estimated cost with the new prediction would be less than the engaged one with the previous decision.


\smallskip
An interesting case occurs \textit{when changing decision is costless}:  {\normalsize $\forall y, y' \in {\cal Y} \times {\cal Y}, ~\mathrm{C}_{cd}(y | y') \, = \, 0$}.  Equation \ref{eq:change_cost_criterion} becomes: 

\begin{equation}
 \begin{aligned}
f_\tau^{\text{rev}}({\cal D}_k, \Tilde{t}\,|\,{\mathbf x}_t) \; = &\; 
\expectancy{(\hat{y},y) \in  {\cal Y}^2}{\mathrm{C}_m(\hat{y}|y) |\, {\mathbf x}_t}{t+\tau} \, + 
\, \vphantom{\expectancy{\hat{y}_t,y}{\mathrm{C}_m(\hat{y}_t|y)}{t+ \tau}} \mathrm{C}_d(\Tilde{t})
\end{aligned}
\label{eq:change_nullcost_criterion}
\end{equation}
which is Equation \ref{eq:cost2}. Then, the strategy is to change decision each time the gain in the expected misclassification cost with a new decision offsets the increased delay cost.

\medskip
\noindent
Now a question is: what would be the \textbf{optimal sequence of decisions} if the decision maker had access to the true class $y$ of the incoming time series but could only use the classifiers $h_t (t \in \{1, \ldots, T\})$ to make its prediction? (Thus, it could not output $y$ before the first time $t ~ \text{st.} ~ h_t({\mathbf x}_t) = \hat{y}_t = y$).

\begin{theorem}[Optimal sequence of decisions when $\forall (y,y') \in {\cal Y}^2$, $C_{cd}(y|y') > 0$]
\label{th_optimal_seq}
For \newline
any time series ${\mathbf x}_T$ of class $y$, the optimal sequence of decision is reduced to a one decision sequence where the (or one of possibly several) optimal time(s)
$t^\star$ is defined by: $t^\star \; = \; \operatornamewithlimits{ArgMin} _{1 \leq t \leq T} \bigl\{  \mathrm{C}_m(\hat{y}_t|y) \, + \, \mathrm{C}_d(t) \bigr\}$.  
\end{theorem}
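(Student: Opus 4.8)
The plan is to prove this by a direct domination argument: I will show that every admissible multi-decision sequence is matched or beaten by a single-decision sequence, and then that the best single-decision sequence is the one triggered at $t^\star$.

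First I would recall the structure of the total cost in Equation~\ref{eq_total_cost}. For a sequence $\mathcal{D}_\ell = \langle \hat{y}_{t_1}, \ldots, \hat{y}_{t_\ell} \rangle$, the cost splits into three pieces: a misclassification term $\mathrm{C}_m(\hat{y}_{t_\ell}|y)$ that depends only on the \emph{final} prediction, a delay term $\mathrm{C}_d(t_\ell)$ that depends only on the \emph{last} decision time, and a sum of change costs $\sum_{i=1}^{\ell-1}\mathrm{C}_{cd}(\hat{y}_{t_{i+1}}|\hat{y}_{t_i})$. The crucial observation is that, since $\mathrm{C}_{cd}(y|y')>0$ for every $y \neq y'$ while $\mathrm{C}_{cd}(y|y)=0$, each summand is nonnegative, so this last sum is nonnegative and is in fact strictly positive as soon as at least one genuine change occurs along the sequence.

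Next I would construct, for an arbitrary sequence $\mathcal{D}_\ell$, the competing single-decision sequence $\mathcal{D}' = \langle \hat{y}_{t_\ell} \rangle$ that emits its \emph{only} prediction at time $t_\ell$. This is admissible: at time $t_\ell$ the classifier outputs exactly $\hat{y}_{t_\ell} = h_{t_\ell}(\mathbf{x}_{t_\ell})$, so the decision maker can indeed emit this very prediction there. The cost of $\mathcal{D}'$ is $\mathrm{C}_m(\hat{y}_{t_\ell}|y) + \mathrm{C}_d(t_\ell)$, carrying no change term. Comparing with $g(\mathcal{D}_\ell|\mathbf{x}_T,y)$, the first two terms are identical and the change sum has been dropped, so $g(\mathcal{D}'|\mathbf{x}_T,y) \leq g(\mathcal{D}_\ell|\mathbf{x}_T,y)$, with equality only in the degenerate case where $\mathcal{D}_\ell$ contains no genuine change (i.e. it merely repeats the same prediction). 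This restricts the search to single-decision sequences without loss of optimality.

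Finally I would minimize over single-decision sequences: a single decision triggered at time $t$ incurs exactly $\mathrm{C}_m(\hat{y}_t|y)+\mathrm{C}_d(t)$, so the best choice is $t^\star = \operatornamewithlimits{ArgMin}_{1\leq t\leq T}\{\mathrm{C}_m(\hat{y}_t|y)+\mathrm{C}_d(t)\}$, which is precisely the claimed optimum (ties admitting several equally good $t^\star$). I expect the only delicate point to be conceptual rather than technical: one must notice that the decision maker never selects the \emph{value} of a prediction---it is pinned to $h_t(\mathbf{x}_t)$---but only its \emph{timing}, and that the delay cost is charged solely on the last decision time, so that accumulating earlier predictions can only add strictly positive change costs while leaving the two governing terms untouched. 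Once this is made explicit, the domination argument is immediate and the theorem follows.
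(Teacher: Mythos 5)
Your proof is correct and follows essentially the same domination argument as the paper: the change-of-decision costs are nonnegative (strictly positive for genuine changes), so any multi-decision sequence is at least as costly as the single decision at its last time, which in turn is at least the cost at $t^\star$. Your version merely makes explicit the intermediate comparison with the single-decision sequence at $t_\ell$, which the paper's one-line proof leaves implicit.
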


\begin{proof}
Consider a sequence of decisions ${\cal D}_k = \langle \hat{y}_{t_1}, \ldots, \hat{y}_{t_k} \rangle$ taken at times $\{t_1, \ldots, t_k\}$. Then the cost paid at time $T$ is: $\sum_{i=1}^{k-1}\mathrm{C}_{cd}(\hat{y}_{i+1}|\hat{y}_{i }) \, + \, \mathrm{C}_m(\hat{y}_k|y) \, + \, \mathrm{C}_d(t_k)$ which cannot be less than: $\mathrm{C}_m(\hat{y}_{t^\star}|y) \, + \, \mathrm{C}_d(t^\star)$.
\end{proof}



Theorem \ref{th_optimal_seq} shows that it is better to make the optimal decision at the right time rather than revoking a decision since this can only lead to sub-optimal sequences of decisions. However, in practice, without having access to the ground truth $y$, it may be beneficial to make a first guess and to change it later on. 

In a consistent way, $f_\tau^{\text{rev}}$ envisions a single future decision $\hat{y}_{t+\tau}$, which entails a minimal cost compared to longer sequences involving several decision changes between $t$ and $t+\tau$. Indeed, the costs associated with the successive decision changes would be added to the Equation \ref{eq:change_cost_criterion}, and would necessarily lead to a higher total cost.


\medskip
\noindent
It must be noted that the \textit{criterion} (Equation \ref{eq:change_cost}) does not specify how and when to make \textbf{the first prediction} $\hat{y}_{t_1}$. Since a decision is mandatory in the framework of decision making, we assume that a ``no decision'' is associated with an infinite cost: 
$f_\tau^{\text{rev}}(\emptyset \, | \,{\mathbf x}_t) = + \, \infty$. By contrast, when a first decision is taken, its  expected cost is:

\begin{equation*}
f_{\tau=0}^{\text{rev}}(\langle \hat{y}_1\rangle \,, t, |\,{\mathbf x}_t) = \; 
 \expectancy{y \in  {\cal Y}^2}{\mathrm{C}_m(\hat{y}_t|y) | {\mathbf x}_t}{t}  + 
 \vphantom{\operatornamewithlimits{Max}_{y, \hat{y}}{ \mathrm{C}_m(\hat{y}|y)}} \; 0 \;  + 
 \vphantom{\operatornamewithlimits{Max}_{y, \hat{y}}{ \mathrm{C}_m(\hat{y}|y)}} \mathrm{C}_d(t)
\end{equation*}

Consequently,
$f_{\tau=0}^{\text{rev}}(\langle\hat{y}_1\rangle \,|\,{\mathbf x}_t) \, < \, f_{\tau=0}^{\text{rev}}(\emptyset \, | \,{\mathbf x}_t)$ 
and the first decision is made accordingly to the non-myopic strategy defined by $f_\tau({\mathbf x}_t)$ in its irrevocable regime (see Equation \ref{eq:cost1}).





\bigskip
A generic algorithmic implementation of the revocable decision-making criterion as defined in Equation \ref{eq:change_cost} is presented in Algorithm \ref{algo:algo2}.
The sequence of decisions is initialized with $\langle \hat{y}_{t_1} \rangle$, where $\hat{y}_{t_1}$ is provided using the irrevocable strategy based on $f_\tau({\mathbf x}_t)$ (see Equation \ref{eq:cost1}) and $t_1$ is the corresponding triggering time. 
The cost of the previous decision $cost_{prev}$ is initialized with the cost of the first decision  $f_{\tau=0}({\mathbf x}_{t_1})$. A new decision is triggered if the conditions of Equation \ref{eq:change_cost} are satisfied.

\smallskip
One goal of our research is to evaluate the added value of explicitly taking into account the cost of the changes of decision with respect to a revocable strategy which would not. 
Accordingly, we coded two algorithms. 

\begin{enumerate}
   \item The first one is named \textsc{eco-rev-cu} for cost unaware (as in Equation \ref{eq:change_nullcost_criterion}).
   \item The second is named \textsc{eco-rev-ca} for cost aware (as in Equation \ref{eq:change_cost_criterion}). 
\end{enumerate}


\begin{algorithm}
    \SetKwInOut{Input}{input}
    \SetKwInOut{Output}{output}

    $decisions \gets \langle \hat{y}_{t_1} \rangle$

    {$t_{prev} \gets t_1$}

    \For{$t= t_1+1\dots T$ }{
      
            $\tau^{\star} \gets ArgMin_{\tau \in \{0 \dots T-t\}} f^{\text{rev}}_{\tau}(\text{decisions} \,, t+\tau | \, \mathbf{x}_{t})$
            
            $cost_{new} \gets f^{\text{rev}}_{\tau=0}(\text{decisions} \,, t+\tau^* | \, \mathbf{x}_{t})$\\
             {$cost_{prev} \gets f^{\text{rev}}_{\tau=0}(\text{decisions} \,, t_{prev} | \, \mathbf{x}_{t})$}\\
            \uIf{$\hat{y}_t \neq \hat{y}_{t_{_{prev}}} \textbf{and} \tau^* = 0  \textbf{and} cost_{new} < cost_{prev}$ }{
                $t_{prev} = t$ \\
                $\text{decisions} \gets \text{decisions} \cup \hat{y}_t$ \; 
              }
    }
    $return decisions$ 
    
  \vspace{4mm}    
    \caption{Generic revocable regime algorithm}
    \label{algo:algo2}

\end{algorithm}

\section{Experiments}
\label{sec_Experiments}


\label{expe}


    



The criterion defined by Equation \ref{eq:change_cost} and its implementation in Algorithm \ref{algo:algo2} aim at identifying the advantageous changes of decision, those that take advantage of the knowledge gained with additional measurements of the incoming time series. 
With the experiments, we want to measure the true added value of this strategy. Specifically, the question is twofold. First, does it recognize useful changes of decisions: those that increase the performance? Second, does it pay off to implement a revocable strategy that takes into account the costs of changing decisions by comparison to one that would not consider these costs? In the following, we report results obtained on 34 datasets (see Section \ref{sec_data}) for a whole range of values for the delay cost $\mathrm{C}_d$ and the cost incurred if changing decision $\mathrm{C}_{cd}$.

\subsection{Implementation choices}

Equation \ref{eq:cost2} for the irrevocable strategy has been proposed and has given way to several different algorithmic versions generically called  \textsc{Economy} as described in (\cite{achenchabe2021early}). 
They differ in the way they group time series in order to estimate the expectation $\expectancy{(\hat{y},y) \in  {\cal Y}^2}{\mathrm{C}_m(\hat{y}|y)}{t+\tau}$. 
Of all these methods, \textsc{Economy}-$\gamma$ is the one that stands out, both because its refined way of predicting the likely future of an incoming time series and its significantly better performances demonstrated in extensive experiments over the other \textsc{Economy} versions as well as with the method of (\cite{mori2017early}). 
This is why it is the method used in our experiments, both in a revocable version that takes the cost of changing decisions into account, and one that does not.

\paragraph{Implementation of the two proposed approaches:} in our experiments, the \textsc{eco-rev-cu} algorithm is simply the \textsc{Economy}-$\gamma$ algorithm allowed to be reiterated after each decision. It thus does not take into account the costs associated with changing decisions, whereas \textsc{eco-rev-ca} does. 
More technically, \textsc{eco-rev-ca} approximates $\expectancy{\hat{y} \in  {\cal Y}}{\mathrm{C}_{{cd}}(\hat{y}|\hat{y}_{t_k})|\, {\mathbf x}_t}{t+\tau}$ in Equation \ref{eq:change_cost_expectancy} by using the groups of time series, denoted by $\G$:

\begin{equation}
\label{eq:cost6}
\normalsize
\expectancy{\hat{y} \in  {\cal Y}}{\mathrm{C}_{{cd}}(\hat{y}|\hat{y}_{t_k})|\, {\mathbf x}_t}{t+\tau}  \approx 
  \sum_{\mathfrak{g}_k \in \G} P(\mathfrak{g}_k|\x_t) \expectancy{\hat{y} \in  {\cal Y}}{\mathrm{C}_{{cd}}(\hat{y}|\hat{y}_{t_k})|\, \mathfrak{g}_k}{t+\tau}
\end{equation}

\noindent
Then, the probability $P_{t+\tau}(\hat{y}|\hat{y}_{t_k},\mathfrak{g}_k)$ is estimated in a frequentist way as the proportion of time series predicted to belong to $\hat{y}_{t_k}$ at time ${t_k}$, and for which the classifier changed its decision at time $t+\tau$ by predicting the class $\hat{y}$.
For full reproducibility of the experiments presented in this paper, an open-source code is available in the supplementary material. 

\paragraph{Overview of the \textsc{Economy}-$\gamma$ approach:} 
more precisely, the groups $\G$ are obtained by stratifying the time series by confidence levels\footnote{This restricts these methods to binary classification problems.} of $h_t$. At each time step $t$, the confidence level $p(h_t(\x_t) = 1)$ of the classifier can take a value in $[0,1]$. Examining the confidence levels for all time series in the validation set ${\cal S'}^t$ truncated to the first $t$ observations, we can discretize the interval $[0, 1]$ into $K$ equal frequency intervals, denoted $\{I_t^1,\ldots, I_t^K\}$.
Then, the future expectancy $\expectancy{y \in  {\cal Y}^2}{\mathrm{C}_m(\hat{y}_t|y) | {\mathbf x}_t}{t+\tau}$ is estimated by modeling the term $P_{t+\tau}(\hat{y}|y,{\mathbf x}_t \in \mathfrak{I}_k^t)$ 
as a projection into the future of the probability distribution over $\mathfrak{I}_k^{t+\tau}$, the confidence intervals of $h_{t + \tau}$. 
A Markov-chain model is used  for this purpose. A fully detailed description of the \textsc{Economy}-$\gamma$ is provided in (\cite{achenchabe2021early}).


\subsection{Data and feature extraction}
\label{sec_data}

Because \textsc{Economy}-$\gamma$ is restricted to binary classification problems, and in order to be able to directly compare our results with those reported in (\cite{achenchabe2021early}), we chose to use the same 34 datasets
that are taken from the UEA \& UCR Time Series Classification Repository\footnote{Available at: http://www.timeseriesclassification.com} (\cite{bagnall16bakeoff}). However, it is important to note that the revocable framework presented here could as well accommodate multi-class classification problems. 
Additional experiments on multi-class problem are reported in the supplementary material 

Each training set is built with 70\% of the examples randomly uniformly selected, while the remaining 30\% are used as test set (note that in each dataset, all time series have the same length).
In addition, each training set is divided into three disjoint subsets: (\textit{i}) 40\% for training the Xgboost   (\cite{chen_xgboost_2016}) classifiers $\{h_t\}_{t \in \{1, \ldots, T\}}$ that are the base classifiers used in the \textsc{Economy}-$\gamma$ method, which offer a good trade-off between computing time and accuracy; (\textit{ii}) 40\% for estimating the probabilities in $f^{rev}_{\tau}$ and $f_{\tau}$; and (\textit{iii}) the remaining 20\% for optimizing the number of groups $|\G|$ in \textsc{Economy}-$\gamma$ which is its only hyper-parameter.

In order to give equal weight to all data sets in the comparison, it is important that they offer the same number of opportunities for decision changes. This is why instants for potential changes are sampled every $n$\% of the length of the times series in each data set (in our case, $n$= 5\%). 
For each possible length, 60 features on the statistical, temporal and spectral domains are extracted using the Time Series Feature Extraction Library (\cite{tsfel}), and are used for training the classifiers $\{h_t\}_{t \in \{1, \ldots, T\}}$.

\subsection{The evaluation criterion}

The cost incurred using an early classification system on a time series ${\mathbf x}_T$ is the sum of three costs, the cost of misclassification, the delay cost incurred at the time of the last decision, and the sum of the costs associated with all changes of decision if any:
\begin{equation}
\text{Cost}({\mathbf x}_T) \;  = \;   
\mathrm{C}_m  ( h_{t_{l}}({\mathbf x_{t_{l}}})|y)  
+   \mathrm{C}_d(t_{l}) 
+ \sum_{i=1}^{ |\D_{l}|-1 } C_{cd}(\hat{y}_{i+1}|\hat{y}_{i})
\label{eq:eval-individual-cost}
\end{equation}

In order to evaluate a method, we compute its mean performance on the test set ${\cal T}$:
\begin{equation}
    AvgCost({\cal T}) \; = \; \frac{1}{|{\cal T}|} \, \sum_{i=1}^{|{\cal T}|} \, \text{Cost}({\mathbf x}^i_T)
\label{eq:eval-cost}
\end{equation}

\subsection{Description of the experiments}
\label{exp-descr}

In our experiments, we compared three algorithms: \textsc{Economy}-$\gamma$ which is an irrevocable decision-maker
, \textsc{eco-rev-cu} which is the revocable version of \textsc{Economy}-$\gamma$ but unaware of the costs of changing decision, and, finally, \textsc{eco-rev-ca} the revocable decision-maker that is aware of these changing costs. We are thus able to measure the added-value of the revocable strategy (\textsc{eco-rev-cu} \textit{vs.} \textsc{Economy}-$\gamma$) and the added-value of being aware of the costs of changing decision 
(\textsc{eco-rev-ca} \textit{vs.} \textsc{eco-rev-cu}).

For a given application, it is expected that the various costs, relative to misclassifications, delays and changes of decision, will be provided by the domain expert. For our experiments, we explored the performance of the three methods on a wide range of cost values:
\begin{itemize}
    \item The \textit{misclassification cost} was set to $\mathrm{C}_m (\hat{y} |y)= 1$ if ~$\hat{y} \neq y$, and $= 0$ if not. 
    \item The \textit{delay cost} was assumed to be linear with a positive slope: $C_d = \alpha \times \frac{t}{T}$ starting from very low $\alpha$ = \{ 0.0001, 0.00025, 0.0005, 0.00075\}, to low $\alpha$ =  \{0.001, 0.0025, 0.005, 0.0075\}, to medium values $\alpha$ =  \{0.01, 0.025, 0.05 ,0.075\} and to high values $\alpha$ =  \{0.1, 0.25, 0.5, 0.75, 1\}.
    \item The \textit{cost of changing decision} was set to $\mathrm{C}_{cd} (\hat{y_1} |\hat{y_2})= \beta$ if $\hat{y_1} \neq \hat{y_2}$, and $= 0$ otherwise. The parameter $\beta$ being taken in the same set of values as $\alpha$\footnote{$\alpha$ and $\beta$ were chosen in a very large spectrum of values so as not biasing the results.} 
\end{itemize}

 The AvgCost criterion defined in Equation \ref{eq:eval-cost} was evaluated on the 34 test sets for all cost values, and the Wilcoxon signed-rank test (\cite{10.2307/3001968}) was performed for all the range of cost values, in order to assess whether the observed performance gap between methods is significant \textit{(``+'' and ``-'')} or not \textit{(``$\circ$'')}.

\subsection{Results and analysis}

Before comparing the methods, it is important to measure the proportion of time series that offer useful opportunities for revocable decisions. Those are the ones where the first decision taken by an irrevocable strategy, here \textsc{Economy}-$\gamma$, turns out not to be optimal. 
For the 34 datasets under study, it turns out that (\textit{i}) for a low delay cost $C_d = 0.0025 \times \frac{t}{T}$ only 3\% of the first decisions can be usefully revoked; (\textit{ii}) for a medium delay cost $C_d = 0.025 \times \frac{t}{T}$ this percentage rises to 3.6\%; and (\textit{iii}) for a high delay cost $C_d = 0.5 \times \frac{t}{T}$ this percentage reaches 8\%.
These figures show that, for these datasets and this range of cost values, it is not clear that a significant added-value can be found in favor of the revocable strategies. 
That such an advantage is nonetheless what the experiments bring out is thus remarkable (see Supplementary Material for a more detailed analysis).


The first lesson is that both revocable methods \textsc{eco-rev-cu} and \textsc{eco-rev-ca} get significantly better results than the irrevocable method \textsc{Economy}-$\gamma$ on a wide range of delay cost $\mathrm{C}_d$ and decision change cost values $\beta$ (see Figures \ref{fig:eco_comp}(a) and \ref{fig:eco_comp}(b)). The second lesson is that it pays off to use a strategy which takes into account the costs of changing decision. Indeed, \textsc{eco-rev-ca} beats \textsc{Economy}-$\gamma$ on a wider range of conditions than \textsc{eco-rev-cu}. 

Both revocable strategies fail to overcome the irrevocable one, \textsc{Economy}-$\gamma$, when $\beta$ is large (i.e. more than $0.1$), 
and then \textsc{eco-rev-cu} fails more often than \textsc{eco-rev-ca}. 
This behavior is not surprising since, when it is very costly to delay a decision, the best strategy is generally to make a very early decision and not to revise it afterwards. 

\begin{figure}[htbp!]
\centering
 \includegraphics[width=0.7\linewidth]{./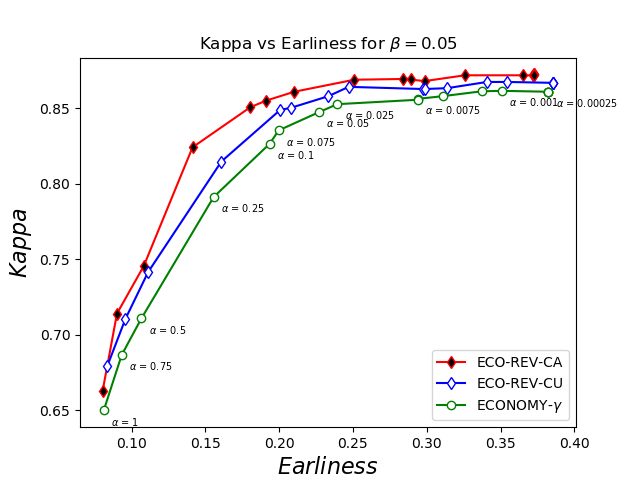}
\caption{Average \textit{Earliness} vs. Average \textit{Kappa} score obtained over all the $34$ datasets for $\beta = 0.05$ and by varying the slope $\alpha$ of the delay cost. The reader may find the same behavior for other $\beta$ values in the supplementary material.}
\label{fig:pareto}
\vspace{-4mm}
\end{figure}

Figure \ref{fig:eco_comp}(c) shows the results of the Wilcoxon signed-rank test between the two revocable strategies. 
It appears that the cost aware approach \textsc{eco-rev-ca} performs significantly better than the cost unaware approach \textsc{eco-rev-cu}, for almost one third of the pairs of values ($\alpha$, $\beta$). 
As the slope of the delay cost $\alpha$ grows, \textsc{eco-rev-ca} becomes significantly better than \textsc{eco-rev-cu} for an increasing larger range of values for $\beta$. 
This means that when the delay cost is rather high, it pays off to use a revocable strategy that takes into account the cost of changing decision. 
In addition, the Friedman test (\cite{Nemenyi62}) shows that \textsc{eco-rev-ca} is on average better ranked than \textsc{eco-rev-cu} in 96\% of pairs ($\alpha$, $\beta$). (Further details are available in the the supplementary material).


\begin{figure}[htbp!]
\centering
(a){\label{fig:ECO-REV-CUvsGAMMA} \includegraphics[width=0.70\linewidth]{./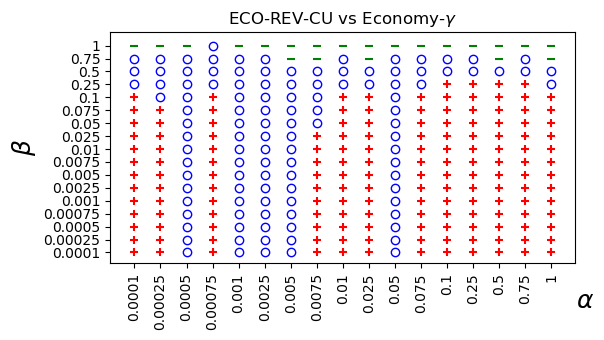}} \\

(b){\label{fig:ECO-REV-CAvsGAMMA} \includegraphics[width=0.70\linewidth]{./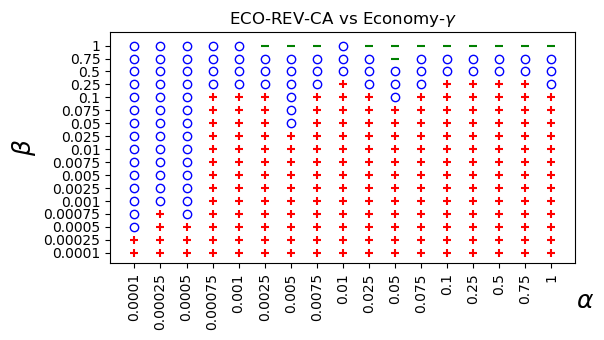}} \\

(c){\label{wilco_ca_cu} \includegraphics[width=0.70\linewidth]{./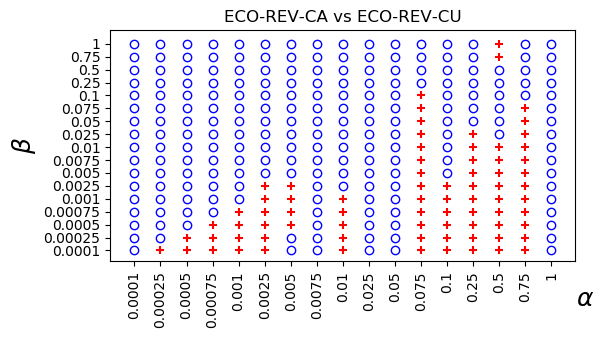}}

\caption{(a) \textsc{eco-rev-cu} \textit{vs.} \textsc{Economy}-$\gamma$; (b) \textsc{eco-rev-ca} \textit{vs.} \textsc{Economy}-$\gamma$; (c) \textsc{eco-rev-ca} \textit{vs.} \textsc{eco-rev-cu}. Wilcoxon signed-rank test applied on the $AvgCost$ criterion over the $34$ test sets, for a range of couples of values $\alpha$ and $\beta$, with ``{$+$}'' indicating a significant success of the first approach, ``{$\circ$}'' an insignificant difference and ``{$-$}'' indicating a significant failure of the first approach.}
\label{fig:eco_comp}
\end{figure}


In order to get a global view of the merits of each method, we have drawn \textit{Pareto curves} (see Figure \ref{fig:pareto}) with respect to the average Cohen's \textit{kappa} score (\cite{cohen1960coefficient}) and the average \textit{earliness}, 
which is defined as the mean of the last triggering times normalized by the length of the series $ earliness = {Avg\{{t_\ell} / {T}\}}$. 
These two quantities are averaged over the $34$ datasets by varying $\alpha$ in the range of values defined in Section \ref{exp-descr}, and for $\beta = 0.05$. The Pareto curves that are obtained show that 
(\textit{i}) the baseline irrevocable \textsc{Economy}-$\gamma$ method is dominated by the two revocable strategies; 
(\textit{ii}) and \textsc{eco-rev-ca} dominates \textsc{eco-rev-cu}. 
These results are consistent with the ones cited above and hold for other values of $\beta$ a well (see the supplementary material 
for more details). More finely, it is apparent that, as the slope $\alpha$ of the delay cost increases, from 0.00025 to 1, all methods first maintain a high kappa, before being unable to maintain it as they are forced to make decisions too early. Still, the \textsc{eco-rev-ca} algorithm is the one that best resists. 

Overall, our experiments show the interest of using revocable strategies for the early classification of time series in a wide range of delay and change of decision costs. They demonstrate also the relevance of the formal criterion that we propose for these strategies.


\section{Conclusion}
\label{sec_conclusion}

Applications abound, in which incoming time series must be labeled as early and as accurately as possible, before all measurements are available. Until now, the problem of early classification of time series was addressed by triggering irrevocable decisions. 
For the first time, this paper defines the revocable version of this problem and introduces an associated optimization problem. Optimization is performed over the space of all possible sequences of decisions given an incoming time series. 
Thanks to a non-myopic criterion given in the paper its exploration is simplified and an algorithm follows naturally. This algorithm has been implemented in two versions, the first one takes into account the cost of changing the decision and the second one does not. Extensive experiments have shown that this algorithm, which explicitly takes into account the cost of changing decisions, has significantly better performances than the same algorithm that does not. In addition, comparison with the irrevocable regime (\textsc{Economy}-$\gamma$) shows that the two proposed algorithms make useful revocations, because they both outperform the irrevocable regime.

As future work, we plan to adapt the proposed framework to the online setting, where decisions should be taken based on a data stream. 
We will also improve the multi-class approaches proposed in the supplementary material. 


\bibliography{myrefs}








\end{document}